\documentclass[]{article}
\usepackage[T1]{fontenc}
\usepackage[final]{neurips_2018}
\usepackage{graphicx}
\usepackage{amsmath,amsfonts,amssymb,amsthm,bm,eucal}
\usepackage[usenames,dvipsnames]{xcolor}
\usepackage{xspace}
\usepackage{algorithm}
\usepackage[noend]{algpseudocode}
\usepackage{verbatim}
\usepackage{subcaption}
\usepackage{enumitem}

\newtheorem{theorem}{Theorem}
\newtheorem{proposition}{Proposition}
\newtheorem{assumption}{Assumption}

\title{An Off-policy Policy Gradient Theorem Using Emphatic Weightings}

\author{
  Ehsan Imani\thanks{These authors contributed equally.}, \ Eric Graves\footnotemark[1], \ Martha White\\
  Reinforcement Learning and Artificial Intelligence Laboratory\\
  Department of Computing Science\\
  University of Alberta\\
  \texttt{\{imani,graves,whitem\}@ualberta.ca}\\
}

\allowdisplaybreaks[1]

\newcommand{\Ppig}{\mathbf{P}_{\!\pi,\gamma}}


\newcommand{\States}{\mathcal{S}}
\newcommand{\Actions}{\mathcal{A}}
\newcommand{\xvec}{\mathbf{x}}

\newcommand{\Pfcn}{\text{P}}


\newcommand{\dmu}{\mathbf{d}_{\mu}}

\newcommand{\interest}{i}

\newcommand{\inv}{{-1}}

\newcommand{\defeq}{\mathrel{\overset{\makebox[0pt]{\mbox{\normalfont\tiny\sffamily def}}}{=}}}
\newcommand{\E}{\mathbb{E}}
\newcommand{\RR}{\mathbb{R}}


\newcommand{\stepsize}{\alpha}

\newcommand{\pparams}{{\boldsymbol{\theta}}}
\newcommand{\psivec}{\boldsymbol{\psi}}

\newcommand{\pdim}{d}

\newcommand{\api}{a_{\pi}}
\newcommand{\vpi}{v_{\pi}}
\newcommand{\qpi}{q_{\pi}}
\newcommand{\baseline}{b}

\newcommand{\Gmat}{\mathbf{G}}

\newcommand{\figwidthfour}{0.329\textwidth}

\begin{document}

\maketitle

\begin{abstract}
Policy gradient methods are widely used for control in reinforcement learning, particularly for the continuous action setting. There have been a host of theoretically sound algorithms proposed for the on-policy setting, due to the existence of the policy gradient theorem which provides a simplified form for the gradient. In off-policy learning, however, where the behaviour policy is not necessarily attempting to learn and follow the optimal policy for the given task, the existence of such a theorem has been elusive. In this work, we solve this open problem by providing the first off-policy policy gradient theorem. The key to the derivation is the use of \emph{emphatic weightings}. We develop a new actor-critic algorithm---called Actor Critic with Emphatic weightings (ACE)---that approximates the simplified gradients provided by the theorem. We demonstrate in a simple counterexample that previous off-policy policy gradient methods---particularly OffPAC and DPG---converge to the wrong solution whereas ACE finds the optimal solution. 
\end{abstract}

\section{Introduction} 

Off-policy learning holds great promise for learning in an online setting, where an agent generates a single stream of interaction with its environment. \emph{On-policy} methods are limited to learning about the agent's current policy. Conversely, in \textit{off-policy} learning, the agent can learn about many policies that are different from the policy being executed. Methods capable of off-policy learning have several important advantages over on-policy methods. Most importantly, off-policy methods allow an agent to learn about many different policies at once, forming the basis for a predictive understanding of an agent's environment \citep{sutton2011horde,white2015developing} and enabling the learning of options \citep{sutton1999between,precup2000temporal}. With options, an agent can determine optimal (short) behaviours, starting from its current state. Off-policy methods can also learn from data generated by older versions of a policy, known as experience replay, a critical factor in the recent success of deep reinforcement learning \citep{lin1992self,mnih2015human,schaul2015prioritized}. They also enable learning from other forms of suboptimal data, including data generated by human demonstration, non-learning controllers, and even random behaviour. Off-policy methods also enable learning about the optimal policy while executing an exploratory policy \citep{watkins1992q}, thereby addressing the exploration-exploitation tradeoff.

Policy gradient methods are a general class of algorithms for learning optimal policies, for both the on and off-policy setting. In policy gradient methods, a parameterized policy is improved using gradient ascent \citep{Williams:1992uf}, with seminal work in actor-critic algorithms \citep{witten1977adaptive,barto1983neuronlike} and many techniques since proposed to reduce variance of the estimates of this gradient \citep{konda2000actor,weaver2001optimal,greensmith2004variance,peters2005natural,bhatnagar2008incremental,bhatnagar2009natural,grondman2012survey,gu2016q}. These algorithms rely on a fundamental theoretical result: the \emph{policy gradient theorem}. This theorem \citep{sutton2000policy,marbach2001simulation} simplifies estimation of the gradient, which would otherwise require difficult-to-estimate gradients with respect to the stationary distribution of the policy and potentially of the action-values if they are used. 

Off-policy policy gradient methods have also been developed, particularly in recent years where the need for data efficiency and decorrelated samples in deep reinforcement learning require the use of experience replay and so off-policy learning.
This work began with OffPAC \citep{degris2012model}, where an off-policy policy gradient theorem was provided that parallels the on-policy policy gradient theorem, but only for tabular policy representations.\footnote{See B. Errata in \cite{degris2012offpolicy} for the clarification that the theorem only applies to tabular policy representations.}
This motivated further development, including a recent actor-critic algorithm proven to converge when the critic uses linear function approximation \citep{maei2018convergent}, as well as several methods using the approximate off-policy gradient such as Deterministic Policy Gradient (DPG) \citep{silver2014deterministic,lillicrap2015continuous}, ACER \citep{wang2016sample}, and Interpolated Policy Gradient (IPG) \citep{gu2017interpolated}.
However, it remains an open question whether the foundational theorem that underlies these algorithms can be generalized beyond tabular representations.

In this work, we provide an off-policy policy gradient theorem, for general policy parametrization. 
The key insight is that the gradient can be simplified if the gradient in each state is weighted with an emphatic weighting. 
We use previous methods for incrementally estimating these emphatic weightings \citep{yu2015onconvergence,sutton2016anemphatic} to design a new off-policy actor-critic algorithm, called Actor-Critic with Emphatic weightings (ACE). We show in a simple three state counterexample, with two states aliased, that solutions are suboptimal with the (semi)-gradients used in previous off-policy algorithms---such as OffPAC and DPG. We demonstrate both the theorem and the counterexample under stochastic and deterministic policies, and that ACE converges to the optimal solution.

\section{Problem Formulation}
  We consider a Markov decision process ($\States$, $\Actions$, $\Pfcn$, $r$),
  where $\States$ denotes the finite set of states,
  $\Actions$ denotes the finite set of actions,
   $\Pfcn: \States \times \Actions \times \States \to [0,\infty)$ denotes the one-step state transition dynamics,
  and $r : \States \times \Actions \times \States \to \mathbb{R}$ denotes the transition-based reward function.
  At each timestep $t=1,2,\ldots$, the agent selects an action $A_t$ according to its behaviour policy $\mu$, where $\mu : \States \times \Actions \to [0,1]$.
  The environment responds by transitioning into a new state $S_{t+1}$ according to P, and emits a scalar reward $R_{t+1}$ such that $\E \left[ R_{t+1 }| S_{t}, A_{t}, S_{t+1} \right] = r(S_t,A_t,S_{t+1})$.
  
  The discounted sum of future rewards given actions are selected according to some target policy $\pi$ is called the return, and defined as:
  \begin{align}
    G_t &\defeq R_{t+1} + \gamma_{t+1}R_{t+2} + \gamma_{t+1}\gamma_{t+2}R_{t+3} + \ldots 
    \\
    &= R_{t+1} + \gamma_{t+1}G_{t+1} \nonumber
  \end{align}
  We use transition-based discounting $\gamma : \States \times \Actions \times \States \to [0,1]$, as it unifies continuing and episodic tasks \citep{white2017unifying}. Then the state value function for policy $\pi$ and $\gamma$ is defined as:
  \begin{align}
    v_{\pi}(s) &\defeq \E_{\pi}[G_{t}| S_t = s] \quad \forall s \in \States
    \\
    &= \sum_{a \in \Actions} \pi(s,a) \sum_{s^\prime \in \States} \Pfcn(s,a,s^\prime)[r(s,a,s^\prime) + \gamma(s,a,s^\prime) v_{\pi}(s^\prime)] \quad \forall a\in\Actions, \forall s\in\States \nonumber
  \end{align}
  In off-policy control, the agent's goal is to learn a target policy $\pi$ while following the behaviour policy $\mu$.
  The target policy $\pi_{\boldsymbol{\theta}}$ is a differentiable function of a weight vector $\boldsymbol{\theta} \in \mathbb{R}^{\pdim}$. The goal is to learn $\boldsymbol{\theta}$ to maximize the following objective function:
  \begin{equation} \label{off_policy_objective}
    J_{\mu}(\boldsymbol{\theta}) \defeq \sum_{s\in\States}d_\mu(s) i(s) v_{\pi_{\boldsymbol{\theta}}}(s)
  \end{equation}
  where $i: \States \rightarrow [0, \infty)$ is an interest function, $d_\mu(s) \defeq \lim_{t\to\infty}\text{P}(S_t=s|s_0,\mu)$ is the limiting distribution of states under $\mu$ (which we assume exists), and P$(S_t=s|s_0,\mu)$ is the probability that $S_t=s$ when starting in state $s_0$ and executing $\mu$. The interest function---introduced by \citet{sutton2016anemphatic}---provides more flexibility in weighting states in the objective. If $i(s) = 1$ for all states, the objective reduces to the standard off-policy objective. Otherwise, it naturally encompasses other settings, such as the start state formulation by setting $i(s) = 0$ for all states but the start state(s). Because it adds no complications to the derivations, we opt for this more generalized objective. 


\section{Off-Policy Policy Gradient Theorem using Emphatic Weightings}\label{sec_oppgt}

The policy gradient theorem with function approximation has only been derived for the on-policy setting thus far, for stochastic policies \citep[Theorem 1]{sutton2000policy} and deterministic policies \citep{silver2014deterministic}.
The policy gradient theorem for the off-policy case has only been established for the setting where the policy is tabular \citep[Theorem 2]{degris2012offpolicy}.\footnote{Note that the statement in the paper is stronger, but in an errata published by the authors, they highlight an error in the proof. Consequently, the result is only correct for the tabular setting.} In this section, we show that the policy gradient theorem does hold
in the off-policy setting, when using function approximation for the policy, as long as we use emphatic weightings. These results parallel those in off-policy policy evaluation, for learning the value function, where issues of convergence for temporal difference methods are ameliorated by using an emphatic weighting \citep{sutton2016anemphatic}.

\newcommand{\dweight}{d}
\newcommand{\emweight}{m}
\newcommand{\emvec}{\mathbf{m}}
\newcommand{\ivec}{\mathbf{i}}
\newcommand{\gvec}{\mathbf{g}}
\newcommand{\eye}{\mathbf{I}}
\newcommand{\vpivecdot}{\dot{\mathbf{v}}_\pi}
\newcommand{\vpivec}{{\mathbf{v}_\pi}}

\begin{theorem}[Off-policy Policy Gradient Theorem]
\begin{equation}
\frac{\partial J_\mu(\pparams) }{\partial \pparams} = \sum_s \emweight(s) \sum_a \frac{\partial \pi(s,a; \pparams) }{\partial \pparams}  \qpi(s,a)
\end{equation}
where $\emweight : \States \rightarrow [0, \infty)$ is the emphatic weighting, in vector form defined as 
\begin{equation}
\emvec^\top \defeq \ivec^\top (\eye - \Ppig)^{-1} 
\end{equation}
where the vector $\ivec \in \RR^{|\States|}$ has entries $\ivec(s) \defeq d_\mu(s) i(s)$ and $\Ppig \in \RR^{|\States| \times |\States|}$ is the matrix with entries $\Ppig(s,s') \defeq \sum_a \pi(s,a; \pparams) \textup{\Pfcn}(s,a,s')\gamma(s,a,s')$
\end{theorem}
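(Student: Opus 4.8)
The plan is to work in vector notation and differentiate the Bellman equation for $\vpi$ directly, rather than differentiating the infinite-sum definition of the return term by term. Writing $\vpivec \in \RR^{|\States|}$ for the vector of state values and $\mathbf{r}_\pi$ for the vector with entries $\sum_a \pi(s,a;\pparams)\sum_{s'}\Pfcn(s,a,s')r(s,a,s')$, the Bellman equation reads $\vpivec = \mathbf{r}_\pi + \Ppig \vpivec$, so that $\vpivec = (\eye - \Ppig)^{-1}\mathbf{r}_\pi$ whenever $(\eye - \Ppig)$ is invertible, and the objective is simply $J_\mu(\pparams) = \ivec^\top \vpivec$. Hence $\partial J_\mu(\pparams)/\partial\pparams = \ivec^\top \vpivecdot$, and the whole proof reduces to computing $\vpivecdot$, the (component-wise) derivative of $\vpivec$ with respect to $\pparams$.

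Next I would differentiate the Bellman equation. Applying the product rule to $\Ppig \vpivec$, the terms in which the derivative falls on $\pi$ (arising from both $\mathbf{r}_\pi$ and $\Ppig$) combine, for each state $s$, into $\sum_a \frac{\partial \pi(s,a;\pparams)}{\partial \pparams}\big[\sum_{s'}\Pfcn(s,a,s')(r(s,a,s') + \gamma(s,a,s')\vpi(s'))\big] = \sum_a \frac{\partial \pi(s,a;\pparams)}{\partial \pparams}\qpi(s,a)$, which I assemble into a vector $\gvec$; the one remaining term is $\Ppig \vpivecdot$. Collecting gives $\vpivecdot = \gvec + \Ppig \vpivecdot$, hence $\vpivecdot = (\eye - \Ppig)^{-1}\gvec$. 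Substituting back, $\partial J_\mu(\pparams)/\partial \pparams = \ivec^\top (\eye - \Ppig)^{-1} \gvec = \emvec^\top \gvec = \sum_s \emweight(s)\sum_a \frac{\partial \pi(s,a;\pparams)}{\partial\pparams}\qpi(s,a)$, using the definition $\emvec^\top \defeq \ivec^\top(\eye - \Ppig)^{-1}$. Everything after the recursion is bookkeeping.

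The main obstacle is the two analytic facts glossed over: that $(\eye - \Ppig)$ is invertible, and that $\pparams \mapsto \vpivec$ is differentiable so the interchange of $\partial/\partial\pparams$ with the linear-system solve is legitimate. Invertibility should follow from the standing assumption on transition-based discounting that makes returns (hence $\vpi$) well defined: $\Ppig$ is nonnegative with row sums at most $1$, and the discounting assumption ensures that from every state the "leak" (where $\sum_{s'}\sum_a\pi(s,a;\pparams)\Pfcn(s,a,s')\gamma(s,a,s')<1$) is reachable, so the spectral radius of $\Ppig$ is strictly less than $1$ and the Neumann series $\sum_{k\ge0}\Ppig^k = (\eye-\Ppig)^{-1}$ converges. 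Given that, $\vpivec = (\eye-\Ppig)^{-1}\mathbf{r}_\pi$ is a composition of the map $\pparams \mapsto (\Ppig,\mathbf{r}_\pi)$ (smooth, since $\pi(\cdot,\cdot;\pparams)$ is assumed differentiable in $\pparams$) with matrix inversion, which is smooth on the open set of invertible matrices; hence $\vpivec$ is differentiable and the formal differentiation above is justified. I would also note that the argument uses only differentiability of $\pi(\cdot,\cdot;\pparams)$ in $\pparams$, so it applies verbatim to deterministic policies as well, which is what lets the same theorem cover the DPG-style case.
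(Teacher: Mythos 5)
Your proof is correct and follows essentially the same route as the paper's: differentiate the Bellman equation via the product rule so that the terms hitting $\pi$ assemble into $\gvec(s) = \sum_a \frac{\partial \pi(s,a;\pparams)}{\partial \pparams}\qpi(s,a)$, solve the resulting linear system $\vpivecdot = \Gmat + \Ppig\vpivecdot$, and contract with $\ivec$. Your added justification of the invertibility of $(\eye - \Ppig)$ and of the differentiability of $\vpivec$ is a welcome supplement that the paper leaves implicit; the only slight overreach is the closing claim that the argument applies ``verbatim'' to deterministic policies, since that case (the paper's Theorem 2) needs a chain rule through the action argument, integral kernels in place of matrices, and additional regularity assumptions.
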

\begin{proof}
First notice that 
\begin{equation*}
\frac{\partial J_\mu(\pparams) }{\partial \pparams} = \frac{\partial \sum_s \ivec(s) \vpi(s) }{\partial \pparams} = \sum_s \ivec(s) \frac{\partial  \vpi(s) }{\partial \pparams}
\end{equation*}
Therefore, to compute the gradient of $J_\mu$, we need to compute the gradient of the value function with respect to the policy parameters. 
A recursive form of the gradient of the value function can be derived, as we show below.  
Before starting, for simplicity of notation we will use 
\begin{align*}
 \gvec(s) = \sum_a \frac{\partial \pi(s,a; \pparams) }{\partial \pparams}  \qpi(s,a)
\end{align*}
where $\gvec: \States \rightarrow \RR^\pdim$. 
Now let us compute the gradient of the value function.
\begin{align}
\frac{\partial  \vpi(s) }{\partial \pparams} &= \frac{\partial }{\partial \pparams} \sum_a  \pi(s,a; \pparams)  \qpi(s,a) \nonumber \\
&=  \sum_a \frac{\partial \pi(s,a; \pparams) }{\partial \pparams}  \qpi(s,a) + \sum_a \pi(s,a; \pparams)  \frac{\partial \qpi(s,a) }{\partial \pparams} \label{eq_product_rule} \\
&=  \gvec(s) + \sum_a \pi(s,a; \pparams)  \frac{\partial \sum_{s'} \Pfcn(s,a,s') (r(s,a,s') + \gamma(s,a,s') \vpi(s'))}{\partial \pparams} \nonumber \\
&=  \gvec(s) + \sum_a \pi(s,a; \pparams) \sum_{s'} \Pfcn(s,a,s') \gamma(s,a,s') \frac{\partial  \vpi(s')}{\partial \pparams} \nonumber
\end{align}
We can simplify this more easily using vector form. Let $\vpivecdot \in \RR^{|\States| \times \pdim}$ be the matrix of gradients (with respect to the policy parameters $\pparams$) of $\vpi$ for each state $s$, and $\Gmat \in \RR^{|\States| \times \pdim}$ the matrix where each row corresponding to state $s$ is the vector $\gvec(s)$. Then
\begin{align}
\vpivecdot  &=  \Gmat + \Ppig \vpivecdot \label{eq_bellman_gradient}
\ \ \ \ \implies \vpivecdot = (\eye - \Ppig)^\inv \Gmat 
\end{align}
Therefore, we obtain
\begin{align*}
\sum_s \ivec(s) \frac{\partial  \vpi(s) }{\partial \pparams} 
&= \ivec^\top \vpivecdot  \ \ =   \ivec^\top (\eye - \Ppig)^\inv \Gmat\\
&= \emvec^\top \Gmat\\
&= \sum_s \emweight(s) \sum_a \frac{\partial \pi(s,a; \pparams) }{\partial \pparams}  \qpi(s,a)
\end{align*}
\end{proof}

\newcommand{\lambdac}{{\lambda_c}}
\newcommand{\lambdaa}{{\lambda_a}}

We can prove a similar result for the deterministic policy gradient objective, for a deterministic policy, $\pi: \States \rightarrow\Actions$. The objective remains the same, but the space of possible policies is constrained, resulting in a slightly different gradient.

\renewcommand{\vec}[1]{\boldsymbol{\mathbf{#1}}}
\newcommand{\intd}{\mathop{}\!\mathrm{d}}

\begin{theorem}[Deterministic Off-policy Policy Gradient Theorem]
  \begin{equation}
    \frac{\partial J_\mu(\vec\theta)}{\partial \vec\theta} = \int_{\States} m(s) \frac{\partial \pi(s;\vec\theta)}{\partial\vec\theta} \left. \frac{\partial q_{\pi}(s,a)}{\partial a}\right\rvert_{a=\pi(s;\vec\theta)} \intd s
  \end{equation}
  where $m : \mathcal{S} \to [0,\infty)$ is the emphatic weighting for a deterministic policy, which is the solution to the recursive equation
  \begin{equation}
    m(s') \defeq d_\mu(s')i(s') + \int_{\States} \textup{\Pfcn}(s,\pi(s;\vec\theta),s') \gamma(s,\pi(s;\vec\theta),s') m(s) \intd s
  \end{equation}
\end{theorem}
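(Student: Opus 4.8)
The plan is to transcribe the proof of the (stochastic) Off-policy Policy Gradient Theorem line for line, with two substitutions: the sum over actions together with the product rule \eqref{eq_product_rule} is replaced by the chain rule through the single deterministic action $a=\pi(s;\vec\theta)$, and every sum over states becomes an integral over $\States$. First I would write $\frac{\partial J_\mu(\vec\theta)}{\partial\vec\theta} = \int_\States d_\mu(s) i(s)\, \frac{\partial \vpi(s)}{\partial\vec\theta}\intd s$, reducing everything to the gradient of the value function. Setting $\qpi(s,a) \defeq \int_\States \Pfcn(s,a,s')\big[r(s,a,s') + \gamma(s,a,s')\vpi(s')\big]\intd s'$ and using $\vpi(s) = \qpi(s,\pi(s;\vec\theta))$, the chain rule gives
\begin{align*}
\frac{\partial \vpi(s)}{\partial\vec\theta} &= \frac{\partial \pi(s;\vec\theta)}{\partial\vec\theta}\left.\frac{\partial \qpi(s,a)}{\partial a}\right\rvert_{a=\pi(s;\vec\theta)} \\
&\quad + \int_\States \Pfcn(s,\pi(s;\vec\theta),s')\,\gamma(s,\pi(s;\vec\theta),s')\, \frac{\partial\vpi(s')}{\partial\vec\theta}\intd s',
\end{align*}
since the reward term carries no $\vec\theta$-dependence once $a$ is held fixed. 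Writing $g(s)$ for the first term, this is the continuous-state analogue of the recursion $\vpivecdot = \Gmat + \Ppig\vpivecdot$ from \eqref{eq_bellman_gradient}.

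Second, I would solve this fixed-point equation. Let $\mathcal{P}_{\pi,\gamma}$ denote the integral operator $(\mathcal{P}_{\pi,\gamma}f)(s) \defeq \int_\States \Pfcn(s,\pi(s;\vec\theta),s')\gamma(s,\pi(s;\vec\theta),s') f(s')\intd s'$. Under the standing assumptions (a bounded, suitably substochastic discounted transition kernel — equivalently, $\mathcal{P}_{\pi,\gamma}$ a contraction, which the existence of $d_\mu$ and the emphatic weighting already presuppose), unrolling yields $\frac{\partial\vpi(s)}{\partial\vec\theta} = \sum_{k=0}^\infty (\mathcal{P}_{\pi,\gamma}^k g)(s)$, i.e. $\vpivecdot = (\eye - \Ppig)^{-1}\Gmat$ in finite-state shorthand. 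Substituting back and interchanging $\sum_k$ with $\int$ gives
\[
\frac{\partial J_\mu(\vec\theta)}{\partial\vec\theta} = \int_\States d_\mu(s)i(s)\sum_{k=0}^\infty(\mathcal{P}_{\pi,\gamma}^k g)(s)\intd s = \int_\States m(s')\, g(s')\intd s',
\]
where $m \defeq \sum_{k=0}^\infty (\mathcal{P}_{\pi,\gamma}^{*})^{k}(d_\mu i)$ is the total weight deposited on each state $s'$ by the adjoint (forward) Neumann series — the continuous counterpart of $\emvec^\top = \ivec^\top(\eye-\Ppig)^{-1}$. The only bookkeeping left is to note that this $m$ is precisely the solution of $m = d_\mu i + \mathcal{P}_{\pi,\gamma}^{*} m$, which is exactly the advertised recursion $m(s') = d_\mu(s')i(s') + \int_\States \Pfcn(s,\pi(s;\vec\theta),s')\gamma(s,\pi(s;\vec\theta),s') m(s)\intd s$; plugging $g(s') = \frac{\partial\pi(s';\vec\theta)}{\partial\vec\theta}\left.\frac{\partial\qpi(s',a)}{\partial a}\right\rvert_{a=\pi(s';\vec\theta)}$ back in recovers the theorem.

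The main obstacle is analytic rather than algebraic. On a non-finite $\States$ one must justify (i) differentiating under the integral sign when computing $\frac{\partial\vpi}{\partial\vec\theta}$ — a Leibniz-rule / dominated-convergence argument that needs $\qpi(s,\cdot)$ differentiable in $a$ and enough integrability and boundedness of $\frac{\partial\pi(s;\vec\theta)}{\partial\vec\theta}$, $\Pfcn$, and $\gamma$; (ii) boundedness of the operator $(\eye-\Ppig)^{-1}$ and convergence of the Neumann series; (iii) the swap of $\sum_k$ and $\int$ (Fubini/Tonelli), together with well-posedness of $m$ as the unique fixed point of its recursion. Each of these follows from the usual contraction/boundedness conditions used for emphatic methods and for DPG, so the substantive work is stating the regularity hypotheses cleanly; the computation itself is a faithful copy of the stochastic proof with the product rule replaced by the chain rule.
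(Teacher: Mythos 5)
Your proposal is correct and follows essentially the same route as the paper's proof in Appendix A: reduce $\partial J_\mu/\partial\vec\theta$ to the per-state value gradients, derive the Bellman-style recursion $\frac{\partial v_\pi(s)}{\partial\vec\theta} = g(s) + \int_\States \mathbf{P}_{\!\pi,\gamma}(s,s')\frac{\partial v_\pi(s')}{\partial\vec\theta}\,\mathrm{d}s'$, invert the operator, swap the order of integration by Fubini, and identify $m$ as the adjoint solution $\int_\States k(s,s')\,d_\mu(s)i(s)\,\mathrm{d}s$ of the stated recursion. The only cosmetic differences are that you obtain the recursion by the chain rule on $q_\pi(s,\pi(s;\vec\theta))$ where the paper expands the integrand via the product rule, and you realize $(\mathbf{I}-\mathbf{P}_{\!\pi,\gamma})^{-1}$ as a Neumann series under a contraction hypothesis where the paper simply assumes the inverse kernel of $\delta(s,s')-\mathbf{P}_{\!\pi,\gamma}(s,s')$ exists (Assumption 3), with the remaining regularity issues you flag handled by Assumptions 1 and 2.
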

The proof is presented in Appendix \ref{deterministic_pgt_appendix}.

\section{Actor-Critic with Emphatic Weightings}\label{sec_eac}

In this section, we develop an incremental actor-critic algorithm with emphatic weightings, that uses the above off-policy policy gradient theorem. To perform a gradient ascent update on the policy parameters, the goal is to obtain a sample of the gradient
\begin{equation}\label{eqn_to_sample}
\sum_s \emweight(s) \sum_a \frac{\partial \pi(s,a; \pparams) }{\partial \pparams}  \qpi(s,a)
.
\end{equation}
Comparing this expression with the approximate gradient used by OffPAC and subsequent methods (which we refer to as semi-gradient methods) reveals that the only difference is in the weighting of states:
\begin{equation}
  \sum_s d_\mu(s) \sum_a \frac{\partial \pi(s,a; \pparams) }{\partial \pparams}  \qpi(s,a)
  .
\end{equation}

Therefore, we can use standard solutions developed for other actor-critic algorithms to obtain a sample of $\sum_a \frac{\partial \pi(s,a; \pparams) }{\partial \pparams}  \qpi(s,a)$. Explicit details for our off-policy setting are given in Appendix \ref{app_eacdetails}.
The key difficulty is then in estimating $\emweight(s)$ to reweight this gradient, which we address below.

 
The policy gradient theorem assumes access to the true value function, and provides a Bellman equation that defines the optimal fixed point.
However, approximation errors can occur in practice, both in estimating the value function (the critic) and the emphatic weighting. For the critic, we can take advantage of numerous algorithms that improve estimation of value functions, including through the use of $\lambda$-returns to mitigate bias, with $\lambda = 1$ corresponding to using unbiased samples of returns \citep{sutton1988learning}.  
For the emphatic weighting, we introduce a similar parameter $\lambdaa \in [0,1]$, that introduces bias but could help reduce variability in the weightings
\begin{equation} \label{eq_lambdaa}
\emvec_{\lambdaa}^\top = \ivec^\top (\eye - \Ppig)^\inv (\eye - (1-\lambdaa) \Ppig) 
.
\end{equation}
For $\lambdaa = 1$, we get $\emvec_{\lambdaa} = \emvec$ and so get an unbiased emphatic weighting.\footnote{Note that the original emphatic weightings \citep{sutton2016anemphatic} use $\lambda = 1-\lambdaa$. This is because their emphatic weightings are designed to balance bias introduced from using $\lambda$ for estimating value functions: larger $\lambda$ means the emphatic weighting plays less of a role. For this setting, we want larger $\lambdaa$ to correspond to the full emphatic weighting (the unbiased emphatic weighting), and smaller $\lambdaa$ to correspond to a more biased estimate, to better match the typical meaning of such trace parameters.}  
For $\lambdaa = 0$, the emphatic weighting is simply $\ivec$, and the gradient with this weighting reduces to the regular off-policy actor critic update \citep{degris2012offpolicy}. For $\lambdaa = 0$, therefore, we obtain a biased gradient estimate, but the emphatic weightings themselves are easy to estimate---they are myopic estimates of interest---which could significantly reduce variance when estimating the gradient. 
Selecting $\lambdaa$ between 0 and 1 could provide a reasonable balance, obtaining a nearly unbiased gradient to enable convergence to a valid stationary point but potentially reducing some variability when estimating the emphatic weighting. 

Now we can draw on previous work estimating emphatic weightings incrementally to obtain an emphatically weighted policy gradient.
Assume access to an estimate of the gradient $\frac{\partial \pi(s,a; \pparams) }{\partial \pparams}  \qpi(s,a)$,
such as the commonly-used estimate:
$\rho_t \delta_t \nabla_\pparams  \ln \pi(s,a; \pparams)$,
where $\rho_t$ is the importance sampling ratio (described further in Appendix \ref{app_eacdetails}),
and $\delta_t$ is the temporal difference error, which---as an estimate of the advantage function $a_\pi(s,a) = \qpi(s,a) - \vpi(s)$---implicitly includes a state value baseline.

Because this is an off-policy setting, the states $s$ from which we would sample this gradient are weighted according to $d_\mu$. We need to adjust this weighting from $d_\mu(s)$ to $\emweight(s)$. We can do so by using an online algorithm previously derived to obtain a sample $M_t$ of the emphatic weighting
\begin{align}\label{def_mt_and_ft}
M_t &\defeq (1-\lambdaa) \interest(S_t) + \lambdaa F_t &&\triangleright \ F_t \defeq \gamma_t \rho_{t-1} F_{t-1} + \interest(S_t)
\end{align} 
for $F_0 = 0$. 
The actor update is then multiplied by $M_t$ to give the emphatically-weighted actor update: $\rho_t M_t \delta_t \nabla_\pparams  \ln \pi(s,a; \pparams)$.
Previous work by \citet{thomas2014bias} to remove bias in natural actor-critic algorithms is of interest here, as it suggests weighting actor updates by an accumulating product of discount factors, which can be thought of as an on-policy precursor to emphatic weightings.
We prove that our update is an unbiased estimate of the gradient for a fixed policy in Proposition \ref{prop_unbiased}. 
We provide the complete Actor-Critic with Emphatic weightings (ACE) algorithm, with pseudo-code and additional algorithm details, in Appendix \ref{app_eacdetails}. 
\begin{proposition}\label{prop_unbiased}
For a fixed policy $\pi$, and with the conditions on the MDP from \citep{yu2015onconvergence},
\begin{equation*}
\E_\mu[\rho_t M_t \delta_t \nabla_\pparams  \ln \pi(S_t,A_t; \pparams)] = \sum_s \emweight(s) \sum_a \frac{\partial \pi(s,a; \pparams) }{\partial \pparams}  \qpi(s,a)
\end{equation*}
\end{proposition}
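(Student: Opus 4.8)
The plan is to condition on $S_t$, use the Markov property to split the expectation into an emphatic‑weighting factor and a policy‑gradient factor, and then let $t\to\infty$. Throughout I read $\E_\mu[\,\cdot\,]$ as the steady‑state expectation, i.e.\ the limit as $t\to\infty$ of the corresponding finite‑time expectations, which the conditions of \citep{yu2015onconvergence} make well defined and finite (and which let me interchange the limit with the finite sum over states), and I take $\delta_t = R_{t+1}+\gamma_{t+1}\vpi(S_{t+1})-\vpi(S_t)$ built from the true value function, as the theorem assumes. Note $F_t$ (hence $M_t$) is a deterministic function of $S_0,A_0,\dots,S_{t-1},A_{t-1},S_t$, whereas $\rho_t\delta_t\nabla_\pparams\ln\pi(S_t,A_t;\pparams)$ depends only on $(S_t,A_t,S_{t+1},R_{t+1})$; conditioned on $S_t=s$ these two groups are independent by the Markov property, so the two factors separate.

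\paragraph{The policy‑gradient factor.}
Since $\rho_t$ and $\nabla_\pparams\ln\pi(S_t,A_t;\pparams)$ are functions of $(S_t,A_t)$ only, and $\E_\mu[\delta_t\mid S_t=s,A_t=a]=\qpi(s,a)-\vpi(s)=\api(s,a)$ by the Bellman equation for $\vpi$, the importance ratio cancels $\mu(s,a)$ and
\begin{align*}
\E_\mu[\rho_t\delta_t\nabla_\pparams\ln\pi(S_t,A_t;\pparams)\mid S_t=s]
&= \sum_a \pi(s,a;\pparams)\,\api(s,a)\,\nabla_\pparams\ln\pi(s,a;\pparams)\\
&= \sum_a \frac{\partial\pi(s,a;\pparams)}{\partial\pparams}\,\qpi(s,a) \;=\; \gvec(s),
\end{align*}
where the $\vpi(s)$ baseline term drops out because $\sum_a \frac{\partial\pi(s,a;\pparams)}{\partial\pparams}=\frac{\partial}{\partial\pparams}\sum_a\pi(s,a;\pparams)=0$.

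\paragraph{The emphatic‑weighting factor.}
Combining the factorization above with the previous step,
\begin{equation*}
\E_\mu[\rho_t M_t \delta_t \nabla_\pparams\ln\pi(S_t,A_t;\pparams)] = \sum_s \Pfcn(S_t=s)\,\E_\mu[M_t\mid S_t=s]\,\gvec(s).
\end{equation*}
It remains to identify the limit of $\Pfcn(S_t=s)\,\E_\mu[M_t\mid S_t=s]$. Let $\mathbf{h}_t$ be the vector with entries $\Pfcn(S_t=s)\,\E_\mu[F_t\mid S_t=s]$. Unrolling $F_t=\gamma_t\rho_{t-1}F_{t-1}+\interest(S_t)$ and taking expectations, conditioning the second term on $(S_{t-1},A_{t-1})$ turns the $\mu$-transition weighted by $\gamma_t\rho_{t-1}$ into a $\Ppig$-transition, giving the linear recursion $\mathbf{h}_t^\top = \ivec_t^\top + \mathbf{h}_{t-1}^\top\Ppig$, where $\ivec_t(s)\defeq\interest(s)\Pfcn(S_t=s)$. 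Under the assumed conditions $\ivec_t\to\ivec$ and the recursion forgets its initialization and converges to the fixed point $\mathbf{h}^\top=\ivec^\top(\eye-\Ppig)^{-1}=\emvec^\top$; hence $\Pfcn(S_t=s)\,\E_\mu[M_t\mid S_t=s]=(1-\lambdaa)\ivec_t(s)+\lambdaa\,\mathbf{h}_t(s)\to(1-\lambdaa)\ivec(s)+\lambdaa\,\emweight(s)$, which in vector form is $\emvec_{\lambdaa}$ of \eqref{eq_lambdaa} and equals $\emvec$ at $\lambdaa=1$. Substituting yields $\sum_s\emweight(s)\sum_a\frac{\partial\pi(s,a;\pparams)}{\partial\pparams}\qpi(s,a)$, as claimed.

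\paragraph{Main obstacle.}
The product‑rule expansion, the baseline identity, and the Markov factorization are routine; the substantive part is the convergence claim for $\E_\mu[F_t\mid S_t=s]$ and its identification with the emphatic weighting, which is precisely where the MDP conditions of \citep{yu2015onconvergence} are needed---they guarantee $\Pfcn(S_t=s)\to d_\mu(s)$ and that $(\eye-\Ppig)$ is invertible with a convergent Neumann series, so the $\mathbf{h}_t$ recursion actually reaches its fixed point. A minor caveat worth recording: the update \eqref{def_mt_and_ft} realizes $\emvec_{\lambdaa}$, so the statement is literally the $\lambdaa=1$ case, and holds for general $\lambdaa$ with $\emweight$ replaced by $\emweight_{\lambdaa}$.
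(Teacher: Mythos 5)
Your proof is correct and follows the same basic strategy as the paper's: condition on the state, use the Markov property to factor the expectation into an emphatic-weighting term and a policy-gradient term, and identify each factor. The difference is one of self-containedness rather than of route. The paper conditions on $(S_t, A_t, S_{t+1})$ via the law of total expectation and then imports two facts from \citet{sutton2016anemphatic} without proof: (a) $d_\mu(s)\,\E_\mu[M_t \mid S_t = s] = \emweight(s)$ and (b) $\E_\mu[M_t \mid S_t = s] = \E_\mu[M_t \mid S_t = s, A_t, S_{t+1}]$. You instead condition only on $S_t$, justify the factorization directly by the conditional independence of past and future given the present, and actually derive fact (a) by setting up the linear recursion $\mathbf{h}_t^\top = \ivec_t^\top + \mathbf{h}_{t-1}^\top \Ppig$ for $\mathbf{h}_t(s) = \Pfcn(S_t = s)\,\E_\mu[F_t \mid S_t = s]$ and passing to its fixed point $\ivec^\top(\eye - \Ppig)^{-1} = \emvec^\top$; this makes explicit exactly where the conditions of \citet{yu2015onconvergence} enter (convergence of $\Pfcn(S_t = s)$ to $d_\mu(s)$ and summability of $\sum_k \Ppig^k$), which the paper leaves implicit. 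Your closing caveat is also correct and worth recording: the update \eqref{def_mt_and_ft} realizes $\emvec_{\lambdaa} = (1-\lambdaa)\ivec + \lambdaa\,\emvec$, so the proposition as literally stated is the $\lambdaa = 1$ case, and for general $\lambdaa$ the weighting on the right-hand side should be $\emweight_{\lambdaa}$.
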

\begin{proof}
Emphatic weightings were previously shown to provide an unbiased estimate for value functions with Emphatic TD. We use the emphatic weighting differently, but 
can rely on the proof from \citep{sutton2016anemphatic} to ensure that (a) $d_\mu(s) \E_\mu[M_t | S_t = s] = \emweight(s)$ and the fact that (b) $\E_\mu[M_t | S_t = s] = \E_\mu[M_t | S_t = s, A_t, S_{t+1}]$. 
Using these equalities, we obtain
\begin{align*}
\E_\mu[&\rho_t M_t \delta_t \nabla_\pparams  \ln \pi(s,a; \pparams)]\\
&= \sum_s d_\mu(s) \E_\mu[\rho_t M_t \delta_t \nabla_\pparams  \ln \pi(S_t,A_t; \pparams) | S_t = s]\\
&= \sum_s d_\mu(s) \E_\mu \Big[\E_\mu[\rho_t M_t \delta_t \nabla_\pparams  \ln \pi(S_t,A_t; \pparams) | S_t = s, A_t, S_{t+1} ]\Big] \hspace{.5cm} \triangleright \text{law of total expectation}\\
&= \sum_s d_\mu(s) \E_\mu \Big[\E_\mu[M_t | S_t = s, A_t, S_{t+1} ] \ \ \E_\mu[\rho_t \delta_t \nabla_\pparams  \ln \pi(S_t,A_t; \pparams) | S_t = s, A_t, S_{t+1} ]\Big]\\
&= \sum_s d_\mu(s) \E_\mu[M_t | S_t = s] \E_\mu \Big[\E_\mu[\rho_t \delta_t \nabla_\pparams  \ln \pi(S_t,A_t; \pparams) | S_t = s, A_t, S_{t+1} ]\Big] \hspace{.8cm} \triangleright \text{ using (b)}\\
&= \sum_s \emweight(s) \sum_a \frac{\partial \pi(s,a; \pparams) }{\partial \pparams}  \qpi(s,a) \hspace{1.0cm} \triangleright \text{ using (a)}
.
\end{align*}
\end{proof}

\section{Experiments}

We empirically investigate the utility of using the true off-policy gradient, as opposed to the previous approximation used by OffPAC; the impact of the choice of $\lambdaa$; and the efficacy of estimating emphatic weightings in ACE. We present a toy problem to highlight the fact that OffPAC---which uses an approximate semi-gradient---can converge to suboptimal solutions, even in ideal conditions, whereas ACE---with the true gradient---converges to the optimal solution. We conduct several other experiments on the same toy problem, to elucidate properties of ACE. 
 
  \subsection{The Drawback of Semi-Gradient Updates}\label{sec_counter}
  
We design a world with aliased states to highlight the problem with semi-gradient updates. The toy problem, depicted in Figure \ref{fig:counterexample}, has three states, where S0 is a start state with feature vector $[1,0]$, and S1 and S2 are aliased, both with feature vector $[0,1]$. This aliased representation forces the actor to take a similar action in S1 and S2. The behaviour policy takes actions A0 and A1 with probabilities $0.25$ and $0.75$ in all non-terminal states, so that S0, S1, and S2 will have probabilities $0.5$, $0.125$, and $0.375$ under $d_\mu$. Under this aliasing, the optimal action in S1 and S2 is A0, for the off-policy objective $J_\mu$. The target policy is initialized to take A0 and A1 with probabilities $0.9$ and $0.1$ in all states, which is near optimal.

We first compared an idealized semi-gradient actor ($\lambdaa = 0$) and gradient actor ($\lambdaa = 1$), with exact value function (critic) estimates. Figures \ref{fig:semi-grad-obj} and \ref{fig:semi-grad-a0s1} clearly indicate that the semi-gradient update---which corresponds to an idealized version of the OffPAC update---converges to a suboptimal solution. This occurs even if it is initialized close to the optimal solution, which highlights that the true solution is not even a stationary point for the semi-gradient objective. On the other hand, the gradient solution---corresponding to ACE---increases the objective until converging to optimal. We show below, in Section \ref{sec_challenges} and Figure \ref{fig:cont-param}, that this is similarly a problem in the continuous action setting with DPG.

  The problem with the semi-gradient updates is made clear from the fact that it corresponds to the $\lambdaa=0$ solution, which uses the weighting $d_\mu$ instead of $m$. In an expected semi-gradient update, each state tries to increase the probability of the action with the highest action-value. There will be a conflict between the aliased states S1 and S2, since their highest-valued actions differ. If the states are weighted by $\dmu$ in the expected update, S1 will appear insignificant to the actor, and the update will increase the probability of A1 in the aliased states. (The ratio between $\qpi(S1,A0)$ and $\qpi(S2,A1)$ is not enough to counterbalance this weighting.) However, S1 has an importance that a semi-gradient update overlooks. Taking a suboptimal action at S1 will also reduce $q(S0,A0)$ and, after multiple updates, the actor gradually prefers to take A1 in S0. Eventually, the target policy will be to take A1 at all states, which has a lower objective function than the initial target policy. This experiment suggests why the weight of a state should depend not only on its own share of $\dmu$, but also on its predecessors, and the behaviour policy's state distribution is not the proper deciding factor in the competition between S1 and S2.

  \begin{figure*}[ht!]
    \centering
    \begin{subfigure}[b]{\figwidthfour}
	\includegraphics[width=\textwidth]{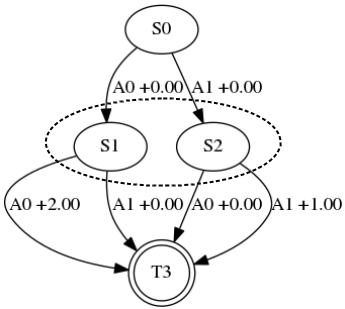}
	\caption{Counterexample}\label{fig:counterexample}
    \end{subfigure}   
    \begin{subfigure}[b]{\figwidthfour}
      \includegraphics[width=\textwidth]{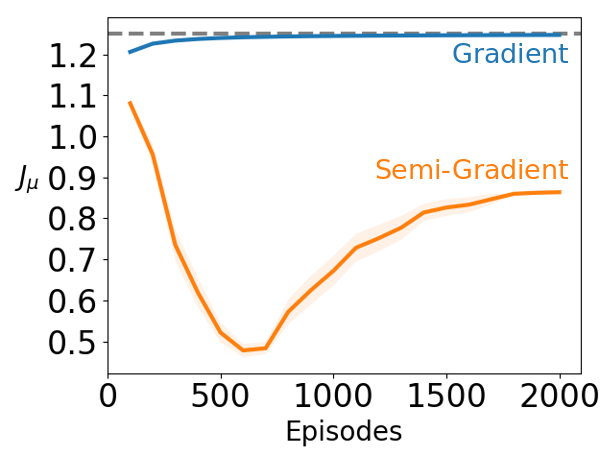}
      \caption{Learning curves}
      \label{fig:semi-grad-obj}
    \end{subfigure}   
    \begin{subfigure}[b]{\figwidthfour}
      \includegraphics[width=\textwidth]{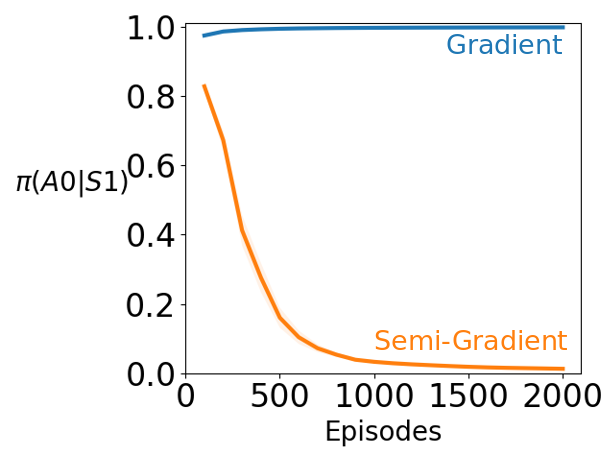}
      \caption{Action probability}
      \label{fig:semi-grad-a0s1}
    \end{subfigure} 
    \caption{(\subref{fig:counterexample}) A counterexample that identifies suboptimal behaviour when using semi-gradient updates. The semi-gradients converge for the tabular setting \citep{degris2012offpolicy}, but not necessarily under function approximation---such as with the state aliasing in this MDP. S0 is the start state and the terminal state is denoted by T3. S1 and S2 are aliased to the actor. The interest $i(s)$ is set to one for all states.  (\subref{fig:semi-grad-obj}) Learning curves comparing semi-gradient updates and gradient updates, averaged over 30 runs with negligible standard error bars. The actor has a softmax output on a linear transformation of features and is trained with a step-size of 0.1 (though results were similar across all the stepsizes tested). The dashed line shows the highest attainable objective function under the aliased representation. (\subref{fig:semi-grad-a0s1}) The probability of taking A0 at the aliased states, where taking A0 is optimal under the aliased representation.}
    \label{fig:semi-grad}
  \end{figure*}

  \begin{figure*}[ht!]
    \centering
    \begin{subfigure}[b]{\figwidthfour}
      \includegraphics[width=\textwidth]{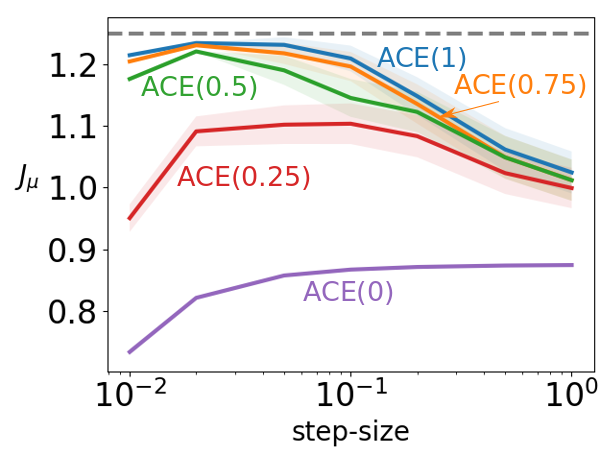}
      \caption{Stepsize sensitivity}
      \label{fig:param-sens}
    \end{subfigure}
    \begin{subfigure}[b]{\figwidthfour}
      \includegraphics[width=\textwidth]{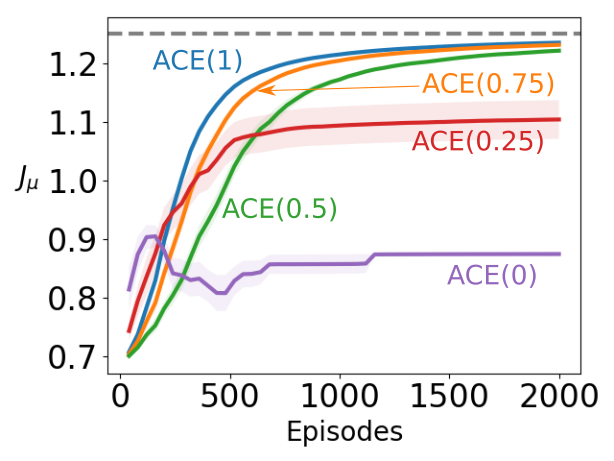}
      \caption{Learning curves }
      \label{fig:param-obj}
    \end{subfigure}   
    \begin{subfigure}[b]{\figwidthfour}
      \includegraphics[width=\textwidth]{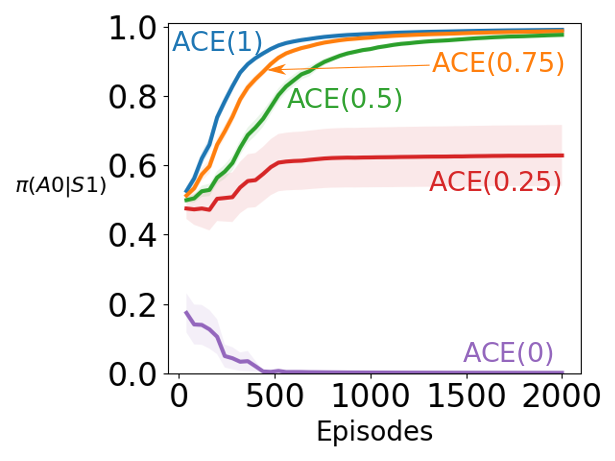}
      \caption{Action probability}
      \label{fig:param-a0s1}
    \end{subfigure} 
    \caption{Performance with different values of $\lambdaa$ in the 3-state MDP, averaged over 30 runs. (\subref{fig:param-sens}) ACE performs well, for a range of stepsizes and even $\lambdaa$ that gets quite small. (\subref{fig:param-obj}) For $\lambdaa = 0$, which corresponds to OffPAC, the algorithm decreases performance, to get to the suboptimal fixed point. Even with as low a value as $\lambdaa = 0.25$, ACE improves the value from the starting point, but does converge to a worse solution than $\lambdaa \ge 0.5$. The learning curves correspond to each algorithm's best step-size. (\subref{fig:param-a0s1}) The optimal behaviour is to take A0 with probability 1, in the aliased states. ACE(0)---corresponding to OffPAC---quickly converges to the suboptimal solution of preferring the optimal action for S2 instead of S1. Even with $\lambdaa$ just a bit higher than 0, convergence is to a more reasonable solution, preferring the optimal action the majority of the time.}
    \label{fig:param}
  \end{figure*}
  
  \subsection{The Impact of the Trade-Off Parameter}\label{lambda_a_experiment}

  The parameter $\lambdaa$ in \eqref{eq_lambdaa} has the potential to trade off bias and variance. For $\lambdaa = 0$, the bias can be significant, as shown in the previous section. A natural question is how the bias changes as $\lambdaa$ decreases from 1 to 0. There is unlikely to be significant variance reduction---it is a low variance domain---but we can nonetheless gain some insight into bias. 
  
 We repeated the experiment in \ref{sec_counter}, with $\lambdaa$ chosen from $\{0, 0.25, 0.5, 0.75, 1\}$ and the step-size chosen from $\{0.01, 0.02, 0.05, 0.1, 0.2, 0.5, 1\}$. To highlight the rate of learning, the actor parameters are now initialized to zero. Figure \ref{fig:param} summarizes the results. As shown in Figure \ref{fig:param-sens}, with $\lambdaa$ close to one, a small and carefully-tuned step-size is needed to make the most of the method. As $\lambdaa$ decreases, the actor is able to learn with higher step-sizes and increases the objective function faster. The quality of the final solution, however, deteriorates with small values of $\lambdaa$ since the actor follows biased gradients. Even for surprisingly small $\lambdaa = 0.5$ the actor converged to optimal, and even $\lambdaa = 0.25$ produced a much more reasonable solution than $\lambdaa = 0$.

  We ran a similar experiment, this time using value estimates from a critic trained with Gradient TD, called GTD($\lambda$) \citep{maei2011gradient} to examine whether the impact of $\lambdaa$ values with the actual (non-idealized) ACE algorithm persists in an actor-critic architecture. The step-size $\alpha_v$ was chosen from $\{10^{-5}, 10^{-4}, 10^{-3}, 10^{-2}, 10^{-1}, 10^{0}\}$, $\alpha_w$ was chosen from $\{10^{-10}, 10^{-8}, 10^{-6}, 10^{-4}, 10^{-2}\}$, and $\{0, 0.5, 1.0\}$ was the set of candidate values of $\lambda$ for the critic. The results in Figure \ref{fig:gtd-param} show that, as before, even relatively low $\lambdaa$ values can still get close to the optimal solution. However, semi-gradient updates, corresponding to $\lambdaa=0$, still find a suboptimal policy.
  
\begin{figure*}[ht!]
   	\centering
   	\begin{subfigure}[b]{\figwidthfour}
   		\includegraphics[width=\textwidth]{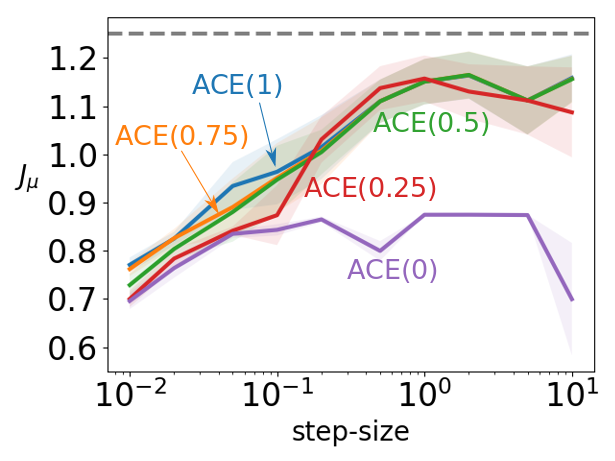}
   		\caption{Stepsize sensitivity}
   		\label{fig:gtd-param-sens}
   	\end{subfigure}
   	\begin{subfigure}[b]{\figwidthfour}
   		\includegraphics[width=\textwidth]{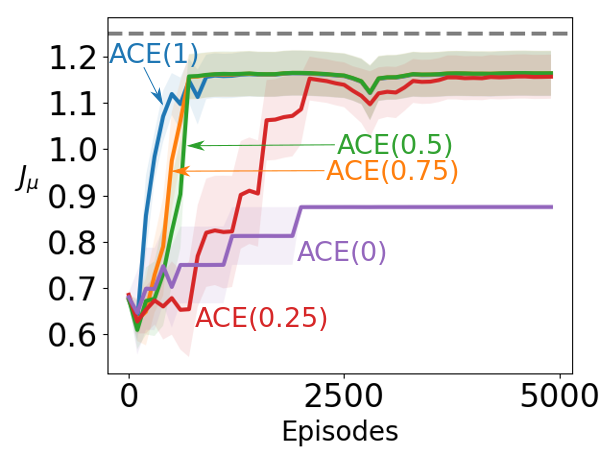}
   		\caption{Learning curves}
   		\label{fig:gtd-param-obj}
   	\end{subfigure}   
   	\begin{subfigure}[b]{\figwidthfour}
   		\includegraphics[width=\textwidth]{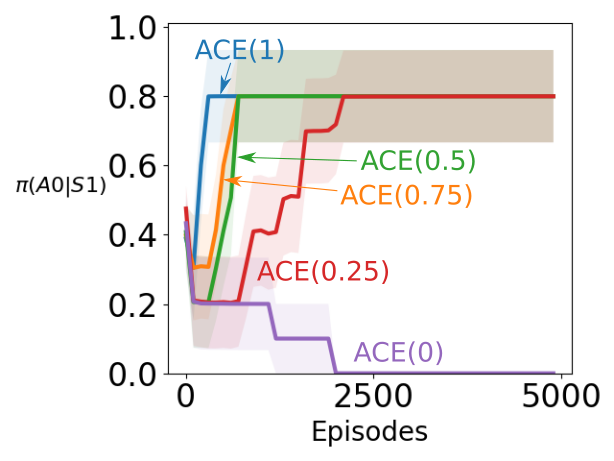}
   		\caption{Action probability}
   		\label{fig:gtd-param-a0s1}
   	\end{subfigure} 
   	\caption{Performance of ACE with a GTD($\lambda$) critic and different values of $\lambdaa$ in the 3-state MDP. The results are averaged over 10 runs. The outcomes are similar to Figure \ref{fig:param}, though noisier due to learning the critic rather than using the true critic. 
	}
   	\label{fig:gtd-param}
\end{figure*}

\subsection{Challenges in Estimating the Emphatic Weightings}\label{sec_challenges}
We have been using an online algorithm to estimate the emphatic weightings. There can be different sources of inaccuracy in these approximations. First, the estimate depends on importance sampling ratios of previous actions in the trajectory. This can result in high variance if the behaviour policy and the target policy are not close. Secondly, derivation of the online algorithm assumes a fixed target policy \citep{sutton2016anemphatic}, while the actor is updated at every time step. Therefore, the approximation is less reliable in long trajectories, as it is partly decided by older target policies in the beginning of the trajectory. We designed experiments to study the efficacy of these estimates in an aliased task with more states. 

The first environment, shown in Figure \ref{fig:long-counterexample} in the appendix, is an extended version of the previous MDP with two long chains before the aliased states. The addition of new states makes the trajectories considerably longer, allowing errors to build up in emphatic weighting estimates. The actor is initialized with zero weights and uses true state values in its updates. The behaviour policy takes A0 with probability 0.25 and A1 with probability 0.75 in all non-terminal states. The actor's step-size is picked from $\{5 \cdot 10^{-5}, 10^{-4}, 2 \cdot 10^{-4}, 5 \cdot 10^{-4}, 10^{-3}, 2 \cdot 10^{-3}, 5 \cdot 10^{-3}, 10^{-2}\}$. We also trained an actor, called True-ACE, that uses true emphatic weightings for the current target policy and behaviour policy, computed at each timestep. The performance of True-ACE is included here for the sake of comparison, and computing the exact emphatic weightings is not generally possible in an unknown environment.
The results in Figure \ref{fig:long-param} show that, even though performance improves as $\lambdaa$ is increased, there is a gap between ACE with $\lambdaa=1$ and True-ACE. This shows the inaccuracies pointed out above indeed disturb the updates in long trajectories.

\begin{figure*}[ht!]
	\centering
	\begin{subfigure}[b]{\figwidthfour}
		\includegraphics[width=\textwidth]{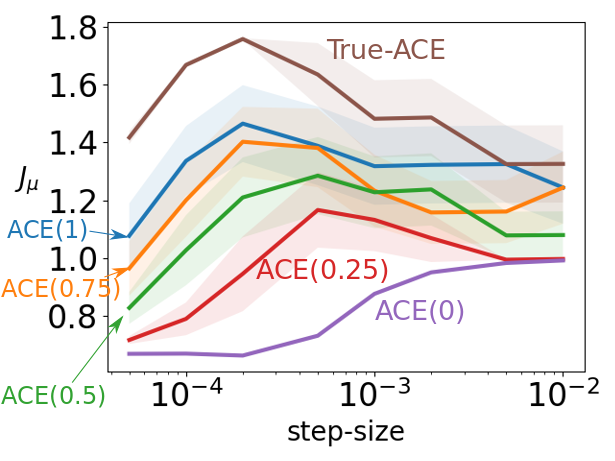}
		\caption{Stepsize sensitivity}
		\label{fig:long-param-sens}
	\end{subfigure}
	\begin{subfigure}[b]{\figwidthfour}
		\includegraphics[width=\textwidth]{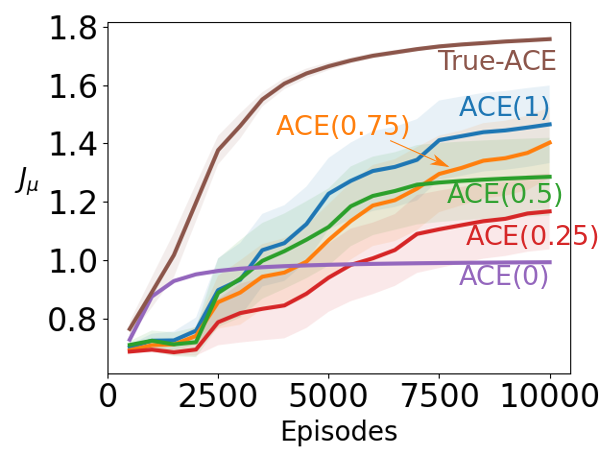}
		\caption{Learning curves }
		\label{fig:long-param-obj}
	\end{subfigure}   
	\begin{subfigure}[b]{\figwidthfour}
		\includegraphics[width=\textwidth]{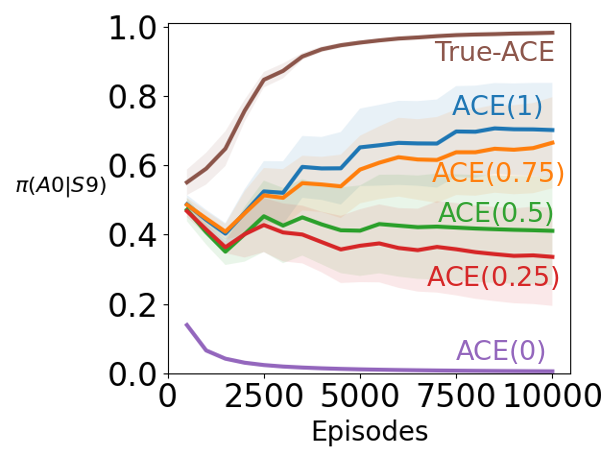}
		\caption{Action probability}
		\label{fig:long-param-a0s1}
	\end{subfigure} 
	\caption{Performance of ACE with different values of $\lambdaa$ and True-ACE on the 11-state MDP. The results are averaged over 10 runs. Unlike Figure \ref{fig:param}, the methods now have more difficulty getting near the optimal solution, though ACE with larger $\lambdaa$ does still clearly get a significantly better solution that $\lambdaa = 0$. }
	\label{fig:long-param}
\end{figure*}

The second environment is similar to Figure \ref{fig:counterexample}, but with one continuous unbounded action. Taking action with value $a$ at S0 will result in a transition to S1 with probability $1-\sigma(a)$ and a transition to S2 with probability $\sigma(a)$, where $\sigma$ denotes the logistic sigmoid function. For all actions from S0, the reward is zero. From S1 and S2, the agent can only transition to the terminal state, with reward $2\sigma(-a)$ and $\sigma(a)$ respectively. The behaviour policy takes actions drawn from a Gaussian distribution with mean $1.0$ and variance $1.0$. 

Because the environment has continuous actions, we can include both stochastic and deterministic policies, and so can include DPG in the comparison. DPG is built on the semi-gradient, like OffPAC. We compare to DPG with Emphatic weightings (DPGE), with the true emphatic weightings rather than estimated ones. We compare to True-DPGE to avoiding confounding factors of estimating the emphatic weighting, and focus the investigation on if DPG converges to a suboptimal solution. Estimation of the emphatic weightings for a deterministic target policy is left for future work. The stochastic actor in ACE has a linear output unit and a softplus output unit to represent the mean and the standard deviation of a Gaussian distribution. All actors are initialized with zero weights.

Figure \ref{fig:cont-param} summarizes the results. The first observation is that DPG demonstrates suboptimal behaviour similar to OffPAC. As training goes on, DPG prefers to take positive actions in all states, because S2 is updated more often. This problem goes away in True-DPGE. The emphatic weightings emphasize updates in S1 and, thus, the actor gradually prefers negative actions and surpasses DPG in performance. Similarly, True-ACE learns to take negative actions but, being a stochastic policy, it cannot achieve True-DPGE's performance on this domain. ACE with different $\lambdaa$ values, however, cannot outperform DPG, and this result suggests that an alternative to importance sampling ratios is needed to extend ACE to continuous actions.

\begin{figure*}[ht!]
	\centering
	\begin{subfigure}[b]{\figwidthfour}
		\includegraphics[width=\textwidth]{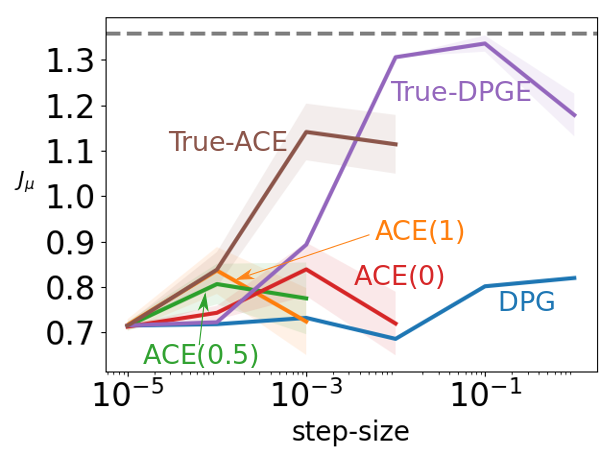}
		\caption{Stepsize sensitivity}
		\label{fig:cont-param-sens}
	\end{subfigure}
	\begin{subfigure}[b]{\figwidthfour}
		\includegraphics[width=\textwidth]{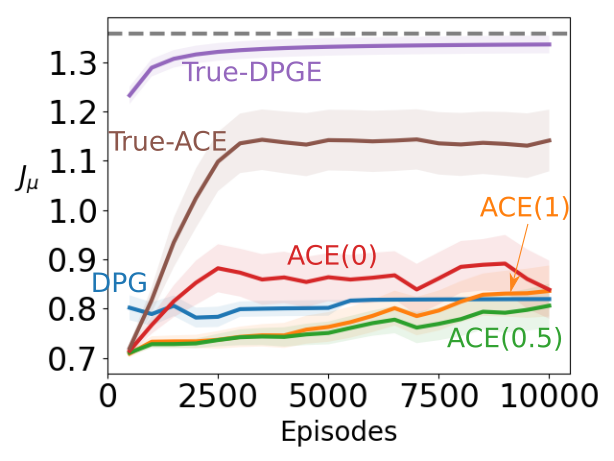}
		\caption{Learning curves}
		\label{fig:cont-param-obj}
	\end{subfigure}   
	\begin{subfigure}[b]{\figwidthfour}
		\includegraphics[width=\textwidth]{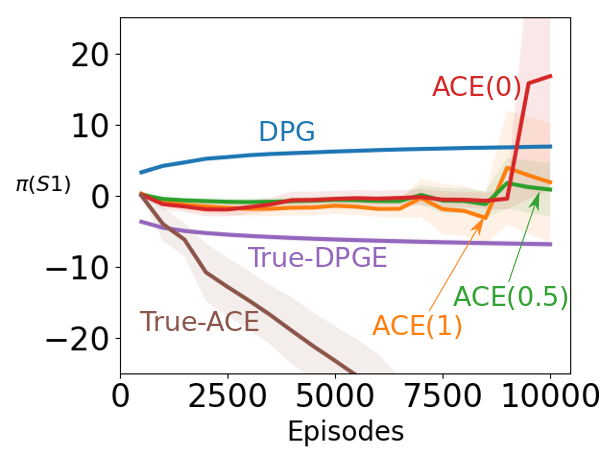}
		\caption{Mean action}
		\label{fig:cont-param-a0s1}
	\end{subfigure} 
	\caption{Performance of ACE with different values of $\lambdaa$, True-ACE, DPG, and True-DPGE on the continuous action MDP. The results are averaged over 30 runs. For continuous actions, the methods have even more difficulty getting to the optimal solutions, given by True-DPGE and True-ACE, though the action selection graphs suggest that ACE for higher $\lambdaa$ is staying nearer the optimal action selection than ACE(0) and DPG.}
	\label{fig:cont-param}
\end{figure*}

  %
    
  \section{Conclusions and Future Work}
    In this paper we proved the off-policy policy gradient theorem, using emphatic weightings. The result is generally applicable to any differentiable policy parameterization. Using this theorem, we derived an off-policy actor-critic algorithm that follows the gradient of the objective function, as opposed to previous method like OffPAC and DPG that followed an approximate semi-gradient.
    We designed a simple MDP to highlight issues with existing methods---namely OffPAC and DPG---particularly highlighting that the stationary points of these semi-gradient methods for this problem do not include the optimal solution. Our algorithm, called Actor-Critic with Emphatic Weightings, on the other hand, which follows the gradient, reaches the optimal solution, both for an idealized setting given the true critic and when learning the critic. We conclude with a result suggesting that more work needs to be done to effectively estimate emphatic weightings, and that important next steps for developing Actor-Critic algorithm for the off-policy setting are to improve estimation of these weightings.
  
  \section{Acknowledgements}
    The authors would like to thank Alberta Innovates for funding the Alberta Machine Intelligence Institute and by extension this research.
    We would also like to thank Hamid Maei, Susan Murphy, and Rich Sutton for their helpful discussions and insightful comments.
  
\bibliographystyle{plainnat}
\bibliography{acbib}

\newpage

\appendix

\section{Proof of Deterministic Off-Policy Policy Gradient Theorem}\label{deterministic_pgt_appendix}

\subsection{Assumptions}\label{dpg_assumptions}

We make the following assumptions on the MDP:

\begin{assumption} \label{assumption:continuous}
	$\Pfcn(s,a,s'), r(s,a,s'), \gamma(s,a,s'), \pi(s;\vec\theta)$ and their derivatives are continuous in all variables $s,a,s',\vec\theta$.
\end{assumption}
\begin{assumption} \label{assumption:compact}
	$\States$ is a compact set of $\mathbb{R}^d$.
\end{assumption}
\begin{assumption} \label{assumption:inverse}
	The policy $\pi$ and discount $\gamma$ are such that for $\Ppig(s,s') = \int_\Actions \pi(s,a) \gamma(s,a,s') \Pfcn(s,a,s') da$, the inverse kernel of $\delta(s,s') - \Ppig(s,s')$ exists.
\end{assumption}

Under Assumption \ref{assumption:continuous}, $v_{\pi_{\vec\theta}}(s)$ and $\frac{\partial v_{\pi_{\vec\theta}}(s)}{\partial\vec\theta}$ are continuous functions of $\vec\theta$ and $s$. Together, Assumptions \ref{assumption:continuous} and \ref{assumption:compact} imply that $\left\lvert\left\lvert \frac{\partial v_{\pi_{\vec\theta}}(s)}{\partial\vec\theta} \right\rvert\right\rvert$, $\left\lvert\left\lvert \frac{\partial \pi(s;\vec\theta)}{\partial\vec\theta} \right\rvert\right\rvert$, and $\left\lvert\left\lvert \left. \frac{\partial q_{\pi_{\vec\theta}}(s,a)}{\partial a}\right\rvert_{a=\pi(s;\vec\theta)} \right\rvert\right\rvert$ are bounded functions of $s$, which allows us to switch the order of integration and differentiation, and the order of multiple integrations later in the proof.

%

\subsection{Proof of Theorem 2}
\begin{proof}
  We start by deriving a recursive form for the gradient of the value function with respect to the policy parameters:
  \begin{align}
    \frac{\partial v_{\pi}(s)}{\partial\vec\theta}
    &= \frac{\partial}{\partial\vec\theta} q_{\pi}(s,\pi(s;\vec\theta))
    \notag{}
    \\
    &= \frac{\partial}{\partial\vec\theta} \int_{\States} \Pfcn(s,\pi(s;\vec\theta),s') \Big( r(s,\pi(s;\vec\theta),s') + \gamma(s,\pi(s;\vec\theta),s') v_\pi(s') \Big) \intd s'
    \notag{}
    \\
    &= \int_{\States} \frac{\partial}{\partial\vec\theta} \left( \Pfcn(s,\pi(s;\vec\theta),s') \Big( r(s,\pi(s;\vec\theta),s') + \gamma(s,\pi(s;\vec\theta),s') v_\pi(s') \Big) \right) \intd s'
    \label{deterministic_eqn_1}
  \end{align}
  where in \eqref{deterministic_eqn_1} we used the Leibniz integral rule to switch the order of integration and differentiation.
  We proceed with the derivation using the product rule:
  \begin{align}
    ={}& \int_{\States} \frac{\partial}{\partial\vec\theta} \Pfcn(s,\pi(s;\vec\theta),s') \Big( r(s,\pi(s;\vec\theta),s') + \gamma(s,\pi(s;\vec\theta),s') v_\pi(s') \Big) \nonumber
    \\
    &+ \Pfcn(s,\pi(s;\vec\theta),s') \frac{\partial}{\partial\vec\theta} \Big( r(s,\pi(s;\vec\theta),s') + \gamma(s,\pi(s;\vec\theta),s') v_\pi(s') \Big) \intd s' \nonumber
    \\
    ={}& \int_{\States} \frac{\partial \pi(s;\vec\theta)}{\partial\vec\theta} \left. \frac{\partial \Pfcn(s,a,s')}{\partial a}\right\rvert_{a=\pi(s;\vec\theta)} \Big( r(s,\pi(s;\vec\theta),s') + \gamma(s,\pi(s;\vec\theta),s') v_\pi(s') \Big) \nonumber
    \\
    &+ \Pfcn(s,\pi(s;\vec\theta),s') \left( \frac{\partial}{\partial\vec\theta} r(s,\pi(s;\vec\theta),s') + \frac{\partial}{\partial\vec\theta} \gamma(s,\pi(s;\vec\theta),s') v_\pi(s') + \gamma(s,\pi(s;\vec\theta),s') \frac{\partial}{\partial\vec\theta} v_\pi(s') \right) \intd s' \nonumber
    \\
    ={}& \int_{\States} \frac{\partial \pi(s;\vec\theta)}{\partial\vec\theta} \left. \frac{\partial \Pfcn(s,a,s')}{\partial a}\right\rvert_{a=\pi(s;\vec\theta)} \Big( r(s,\pi(s;\vec\theta),s') + \gamma(s,\pi(s;\vec\theta),s') v_\pi(s') \Big) \nonumber
    \\
    &+ \Pfcn(s,\pi(s;\vec\theta),s') \left( \frac{\partial \pi(s;\vec\theta)}{\partial\vec\theta} 
    \left. \frac{\partial r(s,a,s')}{\partial a}\right\rvert_{a=\pi(s;\vec\theta)} + \frac{\partial \pi(s;\vec\theta)}{\partial\vec\theta} 
    \left. \frac{\partial \gamma(s,a,s')}{\partial a}\right\rvert_{a=\pi(s;\vec\theta)} v_\pi(s') \right) \intd s' \nonumber
    \\
    &+ \int_{\States} \Pfcn(s,\pi(s;\vec\theta),s') \gamma(s,\pi(s;\vec\theta),s') \frac{\partial}{\partial\vec\theta} v_\pi(s') \intd s' \nonumber
    \\
    ={}& \int_{\States} \frac{\partial \pi(s;\vec\theta)}{\partial\vec\theta} \biggr( \left. \frac{\partial \Pfcn(s,a,s')}{\partial a}\right\rvert_{a=\pi(s;\vec\theta)} \Big( r(s,\pi(s;\vec\theta),s') + \gamma(s,\pi(s;\vec\theta),s') v_\pi(s') \Big) \nonumber
    \\
    &+ \Pfcn(s,\pi(s;\vec\theta),s') \left. \frac{\partial}{\partial a} \Big( r(s,a,s') + \gamma(s,a,s') v_\pi(s') \Big) \right\rvert_{a=\pi(s;\vec\theta)} \biggr) \intd s' \nonumber
    \\
    &+ \int_{\States} \Pfcn(s,\pi(s;\vec\theta),s') \gamma(s,\pi(s;\vec\theta),s') \frac{\partial}{\partial\vec\theta} v_\pi(s') \intd s' \nonumber
    \\
    ={}& \frac{\partial \pi(s;\vec\theta)}{\partial\vec\theta} \int_{\States} \left. \frac{\partial}{\partial a} \left( \Pfcn(s,a,s') \Big( r(s,a,s') + \gamma(s,a,s') v_\pi(s') \Big) \right) \right\rvert_{a=\pi(s;\vec\theta)} \intd s' \nonumber
    \\
    &+
    \int_{\States} \Pfcn(s,\pi(s;\vec\theta),s') \gamma(s,\pi(s;\vec\theta),s') \frac{\partial v_\pi(s')}{\partial\vec\theta} \intd s' \nonumber
    \\
    ={}& \frac{\partial \pi(s;\vec\theta)}{\partial\vec\theta} \left. \frac{\partial q_\pi(s,a)}{\partial a}\right\rvert_{a=\pi(s;\vec\theta)} + \int_{\States} \Pfcn(s,\pi(s;\vec\theta),s') \gamma(s,\pi(s;\vec\theta),s') \frac{\partial v_\pi(s')}{\partial\vec\theta} \intd s' \label{dpg_recursive}
  \end{align}
  
  For simplicity of notation, we will write \eqref{dpg_recursive} as:
  \begin{align}\label{dpg_recursive_simplified}
    \frac{\partial v_{\pi}(s)}{\partial\vec\theta} = g(s) + \int_{\States} \Ppig(s,s') \frac{\partial v_{\pi}(s')}{\partial\vec\theta} \intd s'
  \end{align}
  since $\Pfcn(s,\pi(s;\vec\theta),s') \gamma(s,\pi(s;\vec\theta),s')$ is a function of $s$ and $s'$ for a fixed deterministic policy.
  
  Note that we can write $\frac{\partial v_{\pi}(s)}{\partial\vec\theta}$ as an integral transform using the delta function:
  \begin{equation}\label{dpg_delta}
    \frac{\partial v_{\pi}(s)}{\partial\vec\theta} = \int_{\States} \delta(s,s') \frac{\partial v_{\pi}(s')}{\partial\vec\theta} \intd s'
  \end{equation}

  Plugging \eqref{dpg_delta} into the left-hand side of \eqref{dpg_recursive_simplified}, we obtain:

  \begin{align}
    \int_{\States} \delta(s,s') \frac{\partial v_{\pi}(s')}{\partial\vec\theta} \intd s' = g(s) + \int_{\States} \Ppig(s,s') \frac{\partial v_{\pi}(s')}{\partial\vec\theta} \intd s' \nonumber
    \\
    \implies \int_{\States} \Big( \delta(s,s') - \Ppig(s,s') \Big) \frac{\partial v_{\pi}(s')}{\partial\vec\theta} \intd s' = g(s) \nonumber
    \\
    \implies \frac{\partial v_{\pi}(s)}{\partial\vec\theta} = \int_{\States} k(s,s') g(s') \intd s' \label{dpg_inv_kernel}
  \end{align}
  where $k(s,s')$ is the inverse kernel of $\delta(s,s') - \Ppig(s,s')$.
  Now, using a continuous version of the off-policy objective defined in \eqref{off_policy_objective}, we have:
  \begin{align}
    \frac{\partial J_\mu(\vec\theta)}{\partial\vec\theta} ={}& \frac{\partial }{\partial\vec\theta} \int_{\States} d_\mu(s) i(s) v_\pi(s) \intd s
    \nonumber
    \\
    ={}& \int_{\States} \frac{\partial v_\pi(s)}{\partial\vec\theta} d_\mu(s) i(s) \intd s
    \nonumber
    \\
    ={}& \int_{\States} \int_{\States} k(s,s') g(s') \intd s' \, d_\mu(s) i(s) \intd s
    \nonumber
    \\
    ={}& \int_{\States} \int_{\States} k(s,s') d_\mu(s) i(s) \intd s \, g(s') \intd s'
    \label{dpg_fubini}
  \end{align}
  where in \eqref{dpg_fubini} we used Fubini's theorem to switch the order of integration.
  
  Now, we convert the recursive definition of emphatic weightings for deterministic policies over continuous state-action spaces into a non-recursive form. Recall the definition:
  \begin{equation} \label{cont_m}
    m(s') = d_\mu(s')i(s') + \int_{\States} \Ppig(s,s') m(s) \intd s
  \end{equation}
  Again, we can write $\emweight(s')$ as an integral transform using the delta function:
  \begin{align} \label{m_delta}
    m(s') = \int_{\States} \delta(s, s') m(s) \intd s
  \end{align}
  Plugging \eqref{m_delta} into the left-hand side of \eqref{cont_m}, we obtain:
  \begin{align}
    \int_{\States} \delta(s,s') m(s) \intd s = d_\mu(s')i(s') + \int_{\States} \Ppig(s,s') m(s) \intd s
    \nonumber
    \\
    \implies \int_{\States} (\delta(s,s') - \Ppig(s,s')) m(s) \intd s = d_\mu(s')i(s')
    \nonumber
    \\
    \implies m(s') = \int_{\States} k(s,s') d_\mu(s)i(s) \intd s
    \label{dpg_m_non_recursive}
  \end{align}
  where $k(s,s')$ is again the inverse kernel of $\delta(s,s') - \Ppig(s,s')$.
  Plugging equation \eqref{dpg_m_non_recursive} into \eqref{dpg_fubini} yields:
  \begin{align*}
    \frac{\partial J_\mu(\vec\theta)}{\partial\vec\theta} &= \int_{\States} m(s') g(s') \intd s'
    \\
    &= \int_{\States} m(s) \frac{\partial \pi(s;\vec\theta)}{\partial\vec\theta} \left. \frac{\partial q_\pi(s,a)}{\partial a}\right\rvert_{a=\pi(s;\vec\theta)} \intd s
  \end{align*}
  
  \end{proof}

\section{Algorithm details}\label{app_eacdetails}

Observed states and actions are sampled according to the behaviour policy $\mu$.
Notice the inner sum over actions in equation \ref{eqn_to_sample} is not weighted by any distribution, and will therefore be skewed by sampling the actions according to $\mu$.
One option for solving this problem is to explicitly sum the gradient over actions.
This has the added benefit of reducing variability, but for many actions could be impractical.
An alternative is to modify the sampling distribution using importance sampling:
\begin{align*}
 \sum_a \frac{\partial \pi(s,a; \pparams) }{\partial \pparams}  \qpi(s,a) 
 &=  \sum_a \mu(s,a) \frac{1}{ \mu(s,a)} \frac{\partial \pi(s,a; \pparams) }{\partial \pparams}  \qpi(s,a)\\
 &=  \sum_a \mu(s,a) \frac{\pi(s,a; \pparams)}{\mu(s,a)} \frac{1}{\pi(s,a; \pparams)} \frac{\partial \pi(s,a; \pparams) }{\partial \pparams}  \qpi(s,a)\\
 &=  \sum_a \mu(s,a) \rho(s,a; \pparams) \frac{\partial \ln \pi(s,a; \pparams) }{\partial \pparams}  \qpi(s,a)
\end{align*}
where the last step follows from the fact that the derivative of $\ln y$ is $\frac{1}{y}$.
Now this reweighted gradient can be sampled by sampling states according to $\dmu$, actions according to $\mu$ and then weighting the update $\rho(s,a; \pparams) \frac{\partial \ln \pi(s,a; \pparams) }{\partial \pparams}  \qpi(s,a)$
with $M_t$ as in equation \ref{def_mt_and_ft}.
%
%
%
The resulting update on each step, without any additional variance reduction, for a particular action is
\begin{equation*}
\pparams \gets \pparams + \stepsize \rho_t M_t \frac{\partial \ln \pi(s,a; \pparams) }{\partial \pparams}  \qpi(s,a) 
\end{equation*}
and when summed over all actions, or a subset of actions, is
\begin{equation*}
\pparams \gets \pparams + \stepsize M_t \sum_{b \in \Actions} \pi(s,b; \pparams) \frac{\partial \ln \pi(s,b; \pparams) }{\partial \pparams}  \qpi(s,b) 
\end{equation*}
The second approach is generally more suitable, as it avoids potentially high-variance importance sampling ratios. 
The first approach, though, is necessary when only a value function is estimated, as described in the body of the paper.
In the next section we consider other approaches to reduce variance of this update.
The final ACE algorithm incorporating these variance reduction techniques is summarized in Algorithm \ref{ActorCritic1}.

\subsection{Incorporating baselines}

To reduce the variance of the sampled gradient, it is common to include a baseline as a form of control variate.
The baseline $\baseline : \States \rightarrow \RR$ is incorporated into the update as
\begin{align}\label{baseline_update}
 \sum_a \frac{\partial \pi(s,a; \pparams) }{\partial \pparams} [ \qpi(s,a) - \baseline(s)]
\end{align}
The typical choice of baseline is the value function $\baseline(s) = \vpi(s)$, because
\begin{align*}
 \sum_a \frac{\partial \pi(s,a; \pparams) }{\partial \pparams}  \vpi(s)
&= \vpi(s) \frac{\partial}{\partial \pparams} \sum_a \pi(s,a; \pparams) \\
&= \vpi(s) \frac{\partial}{\partial \pparams} 1 \\
&=  0
\end{align*}
However, the variance of $\qpi(s,a) - \vpi(s)$ is lower because $\vpi(s)$ is correlated with $\qpi(s,a)$ \citep{Williams:1992uf}.

Estimates of \ref{baseline_update} can then be computed in at least three ways.
The first is to simply estimate $\vpi$ and $\qpi$.
The second is to estimate $\vpi$, and then estimate the advantage function $\api(s,a) = \qpi(s,a) - \vpi(s)$.
The advantage function can be updated using $\delta_t$, which compares the value for the given action $a$ from state $s$, $r(s,a) + \gamma(s,a,S_{t+1}) \vpi(S_{t+1})$,
to the value at state s, $\vpi(s)$. 
The third is to again use $\delta_t$, but to avoid computing $\qpi$ altogether. 
For this third approach, $r(s,a) + \gamma(s,a,S_{t+1}) \vpi(S_{t+1})$ is used as an approximation of $\qpi(s,a)$.
To improve this approximation, a $\lambda$-return could be used, which then causes traces of gradients of policy parameters to be used (see Algorithm 1 \citep{degris2012offpolicy}).
Because the traces for the policy parameters are different for ACE, we do not include this additional trace.
It could be added if only $\vpi$ is learned, but we anticipate that in addition to the emphatic weighting, this would induce too much variability in the algorithm.
When using only $\vpi$, therefore, we set this trace parameter to zero and $\qpi(s_t,a_t) - \vpi(s_t)$ is approximated with $\delta_t$.

\begin{algorithm}[t]
\caption{Emphatic Actor Critic}\label{ActorCritic1}
\begin{algorithmic}
\State {Initialize weights for actor} $\pparams$ to zero
\State {Initialize emphatic weightings for actor:} $F_{-1} = 0$
\State {Initialize importance sampling ratios:} $\rho_{-1} = 1$
\State {Suggested (default) settings of parameters:} $i_t = 1$, $\lambda_a = 0.9$
\State {Obtain initial feature vector} $\xvec_0$
\Repeat  
\State Choose an action $a_t$ according to $\mu(\xvec_t, \cdot)$
  \State Observe reward $r_{t+1}$, next state vector $\xvec_{t+1}$ and $\gamma_{t+1}$
  \State Update critic $\hat{V}_t$ (and potentially $\hat{Q}_t$) with any value function learning algorithm
  \State $\rho_t \gets \frac{\pi(\xvec_t, a_t)}{\mu(\xvec_t, a)}$
  \State $F_t \gets \rho_{t-1} \gamma_t F_{t-1} + i_t$
  \State $M_t \gets (1-\lambda_{a,t}) i_t + \lambda_{a,t} F_t$
  \If{Only learned $\hat{V}_t$}
  \State $\psivec_{t} \gets \nabla_\pparams \ln \pi(\xvec_t, a_t; \pparams)$ 
  \State $\delta_{t} \gets r_{t+1} +  \gamma_{t+1} \hat{V}_t(\xvec_{t+1}) - \hat{V}_t(\xvec_t)$
   \State $\pparams \gets \pparams + \alpha_t \rho_t M_t \delta_t \psivec_{t} $ 
\Else
  \For{$b \in \Actions$ or a randomly sampled subset}
  \State $\psivec_{t} \gets \nabla_\pparams \ln \pi(s,b; \pparams)$ 
  \State $\pparams \gets \pparams + \alpha_t M_t \pi(s,b;\pparams)(\hat{Q}_t(\xvec_t, b) - \hat{V}_t(\xvec_t)) \psivec_{t} $
  \EndFor
  \EndIf 
\Until{agent done interaction with environment}
\end{algorithmic}
\end{algorithm}

\begin{figure*}[ht!]
 	\centering
 	\includegraphics[width=\figwidthfour]{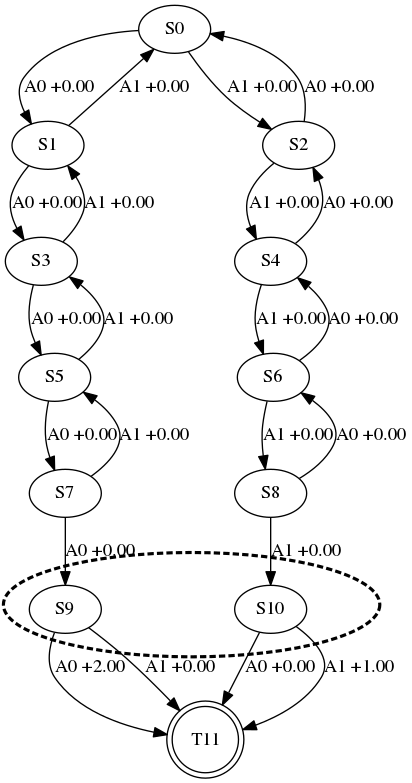}
 	\caption{An 11-state MDP that makes estimating the emphatic weightings difficult. S0 is the start state and the terminal state is denoted by T11. S9 and S10 are aliased to the actor.}
 	\label{fig:long-counterexample}
 	\vspace{-0.3cm}
\end{figure*}

\end{document}